
\documentclass[12pt,a4paper]{article}

\setlength{\topmargin}{0mm}
\setlength{\headheight}{0mm}
\setlength{\headsep}{0mm}
\setlength{\textheight}{257mm}  
\setlength{\textwidth}{170mm}   

\setlength{\oddsidemargin}{0mm}
\setlength{\evensidemargin}{0mm}
\setlength{\parindent}{5mm}

\usepackage{graphicx} 
\usepackage{subfigure}

\usepackage{algorithm}
\usepackage{algorithmic}
\usepackage{hyperref}
\usepackage{amsmath,amstext,amsfonts,amsxtra,amssymb,latexsym,bm,color}
\usepackage{psfrag,balance}
\usepackage{array}

\newcommand{\x}{\mathbf{x}}

\newcommand{\w}{\mathbf{w}}

\renewcommand{\u}{\mathbf{u}}
\newcommand{\U}{\mathbf{U}}
\newcommand{\z}{\mathbf{z}}

\newcommand{\bs}[1]{{\mathbf{#1}}}

\begin{document}

\title{ConeRANK: Ranking as Learning Generalized Inequalities}

\author{Truyen T. Tran$\dagger$ and Duc-Son Pham$\ddagger$\\
	$\dagger$ Center for Pattern Recognition and Data Analytics (PRaDA), \\
		Deakin University, Geelong, VIC, Australia \\	
	$\ddagger$ Department of Computing, Curtin University, Western Australia \\
	Email: \texttt{truyen@vietlabs.com} \texttt{dspham@ieee.org}}

\maketitle

\newtheorem{assumption}{\bf Assumption}
\newtheorem{lemma}{\bf Lemma}
\newtheorem{definition}{Definition}
\newtheorem{theorem}{\bf Theorem}
\newtheorem{proposition}{\bf Proposition}
\newtheorem{corollary}{\bf Corollary}
\newtheorem{observation}{\bf Observation}
\newenvironment{proof}{{\it Proof.}}

\begin{abstract}
We propose a new data mining approach in ranking documents based on the concept of cone-based generalized inequalities between vectors. A partial ordering between two vectors is made with respect to a proper cone and thus learning the preferences is formulated as learning proper cones. A pairwise learning-to-rank algorithm (ConeRank) is proposed to learn a non-negative subspace, formulated as a polyhedral cone, over document-pair differences. The algorithm is regularized by controlling the `volume' of the cone. The experimental studies on the latest and largest ranking dataset LETOR 4.0 shows that ConeRank is competitive against other recent ranking approaches.
\end{abstract}

\section{Introduction}

\emph{Learning to rank} in information retrieval (IR) is an emerging subject \cite{cohen1999learning,joachims2002optimizing,freund2004eba,burges2005learning,cao2007learning} with great promise to improve the retrieval results by applying machine learning techniques to learn the document relevance with respect to a query.  Typically, the user submits a query and the system returns a list of related documents. We would like to learn a ranking function that outputs the position of each returned document in the decreasing order of relevance. 

Generally, the problem can be studied in the supervised learning setting, in that for each query-document pair, there is an extracted feature vector and a position label in the ranking. The feature can be either \emph{query-specific} (e.g. the number of matched keywords in the document title) or \emph{query-independent} (e.g. the PageRank score of the document, number of in-links and out-links, document length, or the URL domain). In training data, we have a  groundtruth ranking per query, which can be in the form of a relevance score assigned to each document, or an ordered list in decreasing level of relevance.

The learning-to-rank problem has been approached from different angles, either treating the ranking problem as ordinal regression \cite{herbrich2lmr,chu2006gpo}, in which an ordinal label is assigned to a document,  as pairwise preference classification \cite{joachims2002optimizing,freund2004eba,burges2005learning} or as a listwise permutation problem \cite{plackett1975analysis,cao2007learning}.

We focus on the pairwise approach, in that ordered pairs of document per query will be treated as training instances, and in testing, predicted pairwise orders within a query will be combined to make a final ranking.  The advantage of this approach is that many existing powerful binary classifiers that can be adapted with minimal changes - SVM \cite{joachims2002optimizing},
boosting \cite{freund2004eba}, or logistic regression \cite{burges2005learning} are some choices. 

We introduce an entirely new perspective based on the concept of cone-based \emph{generalized inequality}. More specifically, the inequality between two multidimensional vectors is defined with respect to a cone. Recall that a cone is a geometrical object in that if two vectors belong to the cone, then any non-negative linear combination of the two vectors also belongs to the cone. Translated into the framework of our problem, this means that given a cone $\mathcal{K}$, when document $l$ is ranked higher than document $m$, the feature vector $\mathbf{x}_l$ is `greater' than the feature vector $\mathbf{x}_m$ with respect to  $\mathcal{K}$ if $\mathbf{x}_l-\mathbf{x}_m \in \mathcal{K}$. Thus, given a cone, we can find  the correct order of preference for any given document pair. However,  since the cone $\mathcal{K}$ is not known in advance, it needs to be estimated from the data.  Thus, in our paper, we consider polyhedral cones constructed from basis vectors and  propose a method for learning the cones via the estimation of this set of basis vectors.

This paper makes the following contributions:
\begin{itemize}
\item A novel formulation of the learning to rank problem, termed as ConeRank, from the angle of cone learning and generalized inequalities;
\item A study on the generalization bounds of the proposed method;
\item Efficient online cone learning algorithms,  scalable with large datasets; and,
\item An evaluation of the algorithms on the latest LETOR 4.0 benchmark dataset~\footnote{Available at: 
	http://research.microsoft.com/en-us/um/beijing/projects/letor/letor4dataset.aspx}.
\end{itemize}

\begin{figure}
\includegraphics[width=0.95\linewidth]{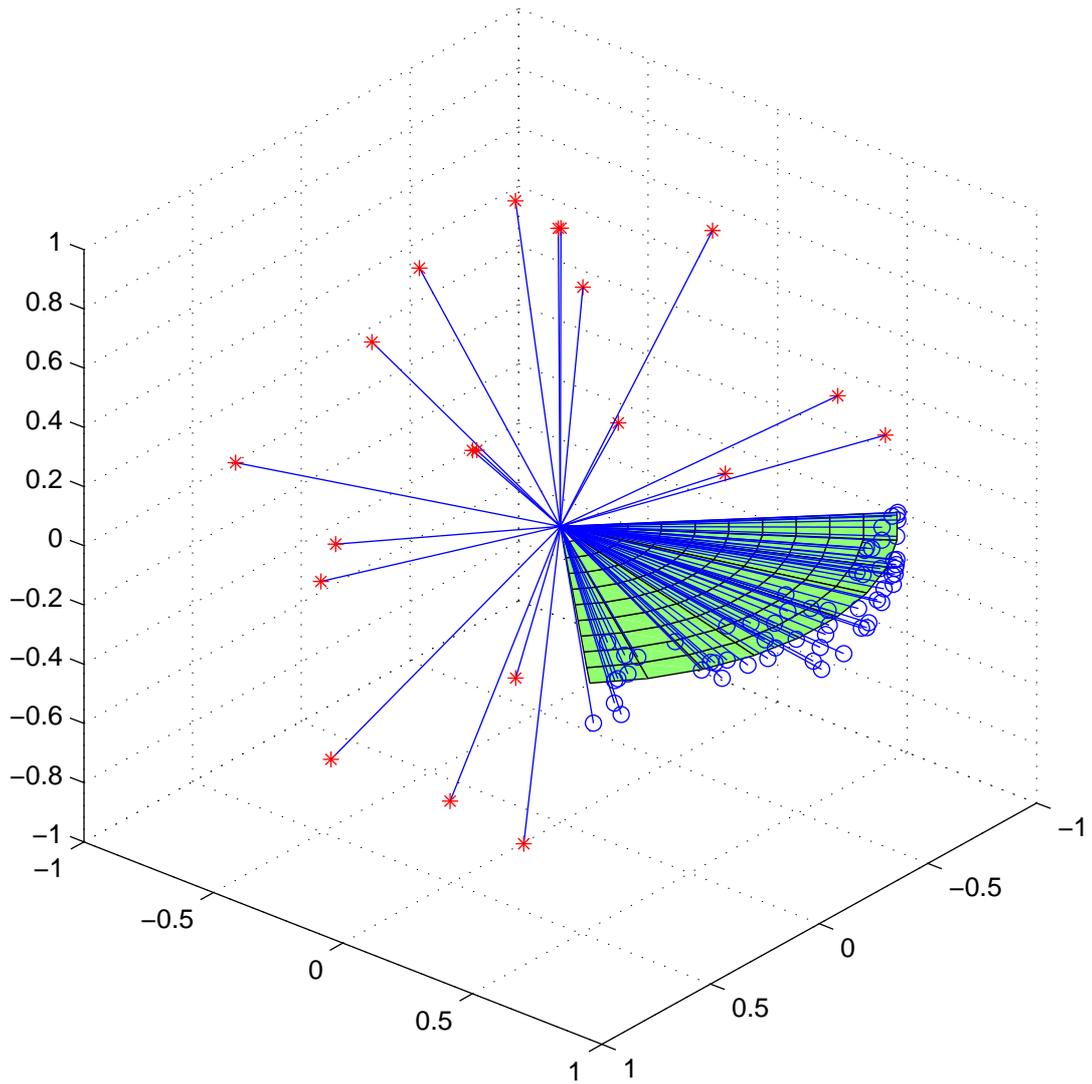}
\vspace{-1.5cm}
\caption{\label{FIG_3DCONE} Illustration of ConeRank. Here the pairwise differences are distributed in 3-dimensional space, most of which however lie only on a surface and can be captured most effectively by a `minimum' cone plotted in green. Red stars denotes noisy samples.}
\end{figure}

\section{Previous Work}

Learning-to-rank is an active topic in machine learning, although ranking and permutations have been studied widely in statistics. One of the earliest paper in machine learning is perhaps \cite{cohen1999learning}. The seminal paper \cite{joachims2002optimizing} stimulates much subsequent research. Machine learning methods extended to ranking can be divided into:

\emph{Pointwise approaches}, that include methods  such as ordinal regression \cite{herbrich2lmr,chu2006gpo}. Each query-document pair is assigned a ordinal label, e.g. from the set $\{0,1,2,...,L\}$. This simplifies the problem as we do not need to worry about the exponential number of permutations. The complexity is therfore linear in the number of query-document pairs. The drawback is that the ordering relation between documents is not explicitly modelled.

{\it Pairwise approaches}, that span preference to binary classification \cite{joachims2002optimizing,freund2004eba,burges2005learning} methods, where the goal is to learn a classifier that can separate two documents (per query). This casts the ranking problem into a standard classification framework, wherein many algorithms are readily available. 
The complexity is quadratic in number of documents per query and linear in number of queries.

\emph{Listwise approaches}, modelling the distribution of permutations \cite{cao2007learning}. The ultimate goal is to model a full distribution of all permutations, and the prediction phase outputs the most probable permutation. In the statistics community, this problem has been long addressed \cite{plackett1975analysis}, from a different angle. The main difficulty is that the number of permutations is exponential and thus approximate inference is often used.

However, in IR, often the evaluation criteria is different from those employed in learning. So there is a trend to optimize the (approximate or bound) IR metrics \cite{Cossock_Zhang08}.

\section{Proposed Method}

\subsection{Problem Settings}
We consider a training set of $P$ queries $q_1,q_2,\ldots,q_P$ randomly sampled from a query space $\mathcal{Q}$ according to some distribution ${P}_{\mathcal{Q}}$. Associated with each query $q$ is a set of documents represented as pre-processed feature vectors $\{\x^q_1,\x^q_2\ldots\}, \x^q_l\in\mathbb{R}^N$ with relevance scores $r^q_1,r^q_2,\ldots$ from which ranking over documents can be based. We note that the values of the feature vectors may be query-specific and thus the same document can have different feature vectors according to different queries. Document $\x^q_l$ is said to be more preferred than document $\x^q_m$ for a given query $q$ if $r^q_l>r^q_m$ and vice versa. In the {\it pairwise} approach,  pursued in this paper, equivalently we learn a ranking function $f$ that takes input as a pair of different documents $\x^q_l,\x^q_m$ for a given query $q$ and returns a value $y\in\{+1,-1\}$ where $+1$ corresponds to the case where $\x^q_l$ is ranked above $x^q_m$ and vice versa. For notational simplicity, we may drop the superscript $^q$ where there is no confusion.

\subsection{Ranking as Learning Generalized Inequalities}

In this work, we consider the ranking problem from the viewpoint of generalized inequalities. In convex optimization theory \cite[p.34]{Boyd_Vandenberghe04}, a generalized inequality $\succ_{\mathcal{K}}$ denotes a partial ordering induced by a proper cone $\mathcal{K}$, which is convex, closed, solid, and pointed:
\[\x_{l}\succ_{\mathcal{K}}\x_{m}\Longleftrightarrow\x_{l}-\x_{m}\in\mathcal{K}.\]
Generalized inequalities satisfy many properties such as preservation under addition, transitivity, preservation under non-negative scaling, reflexivity, anti-symmetry, and preservation under limit. 

We propose to learn a generalized inequality or, equivalently, a proper cone $\mathcal{K}$ that best describes the training data (see Fig. \ref{FIG_3DCONE} for an illustration). 
Our important assumption is that this proper cone, which induces the generalized inequality, is not query-specific and thus prediction can be used for unseen queries and document pairs coming from the same distributions. 

From a fundamental property of convex cones, if $\z\in\mathcal{K}$ then $w\z\in\mathcal{K}$ for all $w>0$, and any non-negative combination of the cone elements also belongs to the cone, i.e. if $\u_{k}\in\mathcal{K}$ then $\sum_{k}w_{k}\u_{k}\in\mathcal{K}, \forall w_{k}>0$.

In this work, we restrict our attention to {\it polyhedral} cones for the learning of generalized inequalities. A polyhedral cone is a polyhedron and a cone. A polyhedral cone can be defined as sum of rays or intersection of halfspaces. We construct the polyhedral cone $\mathcal{K}$ from `basis' vectors $\U=[\u_1,\u_2,\ldots,\u_K]$. They are the extreme vectors lying on the intersection of hyperplanes that define the halfspaces. Thus, the cone $\mathcal{K}$ is a conic hull of the basis vectors and is completely specified if the basis vectors are known. A polyhedral cone with $K$ basis vectors is said to have an order $K$ if one basis vector cannot be expressed as a conic combination of the others. It can be verified that under these regular conditions, a polyhedral cone is a proper cone and thus can induce a generalized inequality. We thus propose to learn the basis vectors $\u_k,k=1,\ldots,K$ for the characterization of $\mathcal{K}$. 

A projection of $\z$ onto the cone $\mathcal{K}$, denoted by $\mathsf{P}_{\mathcal{K}}(\z)$, is generally defined as some $\z'\in\mathcal{K}$ such that a certain criterion on the distance between $\z$ and $\z'$ is met. As $\z'\in\mathcal{K}$, it follows that it admits a conic representation $\z'=\sum_{k=1}^{K}w_k\u_k = \U\w, \ w_k\geq 0$. By restricting the order $K\leq N$, it can be shown that when $\U$ is full-rank then the conic representation is unique.

Define an ordered document-pair ($l,m$) difference as $\z=\x_l-\x_m$ where, without loss of generality, we assume that $r_l\geq r_m$. The linear representation of $\z'\in\mathcal{K}$ can be found from  
\begin{eqnarray}
	\label{EQU_PROJECTION}
 	\min_{\w} &&  \|\z - \U\w\|^2_2,  \ \ \w \geq \bs{0} 
\end{eqnarray}
where the inequality constraint is element-wise. It can be seen that $\mathsf{P}_{\mathcal{K}}(\z)=\z, \forall \z\in\mathcal{K}$. Otherwise, if $\z\not\in\mathcal{K}$ then it can be easily proved by contradiction that the solution $\w$ is such that $\U\w$ lies on a facet of $\mathcal{K}$. Let $\mathcal{K}^{-}$ be the cone with the basis $-\U$ then it can be easily shown that if $\z\in\mathcal{K}^{-}$ then $\mathsf{P}_{\mathcal{K}}(\z) = \bs{0}$. 

Returning to the ranking problem, we need to find a $K$-degree polyhedral cone $\mathcal{K}$ that captures most of the training data. Define the $\ell_2$ distance from $\z$ to $\mathcal{K}$ as $d_{\mathcal{K}}(\z) = \|\z-\mathsf{P}_{\mathcal{K}}(\z)\|_2$ then we define the document-pair-level loss as
	\begin{eqnarray}
		\label{EQU_LOSS}
	 	l(\mathcal{K};\z,y) = d_{\mathcal{K}}(\z)^2.
	\end{eqnarray}
Suppose that for a query $q$, a set of document pair differences $S_q = \{\z^q_1,\ldots,\z^q_{n_q}\}$ with relevance differences $\phi^q_1,\ldots,\phi^q_{n_q}, \phi^q_j>0$ can be obtained. Following \cite{Lan_etal08}, we define the empirical query-level loss as
	\begin{eqnarray}
	 	\hat{L}(\mathcal{K};q,S_q) = \frac{1}{n_q} \sum_{j=1}^{n_q}  l(\mathcal{K};\z^q,y^q). 
	\end{eqnarray}
For a full training set of $P$ queries and $S=\{S_{q_1},\ldots,S_{q_P}\}$ samples, we define the query-level empirical risk as
	\begin{eqnarray}
		\label{EQU_RISK}
		\hat{R}(\mathcal{K};S) = \frac{1}{P}\sum_{i=1}^{P} \hat{L}(\mathcal{K};q_i,S_{q_i}).
	\end{eqnarray}
Thus, the polyhedral cone $\mathcal{K}$ can be found from minimizing this query-level empirical risk. Note that even though other performance measures such as  mean average precision (MAP) or normalized discounted cumulative gain (NDCG) is the ultimate assessment, it is observed that good empirical risk often leads to good MAP/NDCG and simplifies the learning. We next discuss some additional constraints for the algorithm to achieve good generalization ability.

%
%
%
%

\subsection{Modification}

{\it Normalization.} Using the proposed approach, the direction of the vector $\z$ is more important than its magnitude. However, at the same time, if the magnitude of $\z$ is small it is desirable to suppress its contribution to the objective function.  We thus propose the normalization of input document-pair differences as follows
\begin{equation}
	\z\leftarrow \rho\z /(\alpha +\|\z\|_2), \ \ \alpha,\rho > 0.
\end{equation}
The constant $\rho$ is simply the scaling factor whilst $\alpha$ is to suppress the noise when $\|\z\|_2$ is too small. With this normalization, we note that 
\begin{eqnarray}
 	\label{EQU_NORM}
	\|\z\|_2 \leq \rho.
\end{eqnarray}

{\it Relevance weighting.} In the current setting, we consider all ordered document-pairs equally important. This is however a disadvantage because the cost of the mismatch between the two vectors which are close in rank is less than the cost between those distant in rank. To address this issue, we propose an extension of (\ref{EQU_LOSS}) 
\begin{equation}
	\label{EQU_LOSS2}
	l(\mathcal{K};\z,y) = \phi d_{\mathcal{K}}(\z)^2.
\end{equation}
where $\phi>0$ is the corresponding ordered relevance difference.

{\it Conic regularization.} From statistical learning theory \cite[ch.4]{Scholkopf_Smola02}, it is known that in order to obtain good generalization bounds, it is important to restrict the hypothesis space from which the learned function is to be found. Otherwise, the direct solution from an unconstrained empirical risk minimization problem is likely to overfit and introduces large variance (uncertainty). In many cases, this translates to controlling the complexity of the learning function. In the case of support vector machines (SVMs), this has the intuitive interpretation of maximizing the margin, which is the inverse of the norm of the learning function in the Hilbert space.

In our problem, we seek a cone which captures most of the training examples, i.e. the cone that encloses the conic hull of most training samples. In the SVM case, there are many possible hyperplanes that separates the samples without a controlled margin. Similarly, there is also a large number of polyhedral cones that can capture the training samples without further constraints. In fact, minimizing the empirical risk will tend to select the cone with larger solid angle so that the training examples will have small loss (see Fig. \ref{FIG_REG}). In our case, the complexity is translated roughly to the size (volume) of the cone. The bigger cone will likely overfit (enclose) the noisy training samples and thus reduces generalization. Thus, we propose the following constraint to {\it indirectly} regularize the size of the cone
	\begin{eqnarray}
			\label{EQU_WCON}
			0 \leq \lambda_l \leq \| \w \|_1 \leq \lambda_u, \ \ \w \geq \bs{0}
	\end{eqnarray}
where $\w$ is the coefficients defined as in (\ref{EQU_PROJECTION}) and for simplicity we set $\lambda_l=1$. To see how this effectively controls $\mathcal{K}$, consider a 2D toy example in Fig. \ref{FIG_REG}. If $\lambda_u=1$, the solution is the cone $\mathcal{K}_1$. In this case, the loss of the positive training examples (within the cone) is the distance from them to the simplex define over the basis vectors $\u_1,\u_2$ (i.e. $\{\z: \z = \lambda\u_1 + (1-\lambda)\u_2, 0\leq \lambda\leq 1\}$) and the loss of the negative training example is the distance to the cone. With the same training examples, if we let $\lambda_u>1$ then there exists a cone solution $\mathcal{K}_2$ such that all the losses are effectively zero. In particular, for each training example, there exists a corresponding $\|\w\|_1=\lambda$ such that the corresponding simplex $\{\z: \z= w_1\u_1+w_2\u_2, w_1+w_2=\lambda\}$, passes all positive training examples.

Finally, we note that as the product $\U\w^{q_i}_j$ appears in the objective function and that both $\U$ and $\w^{q_i}_j$ are variables then there is a scaling ambiguity in the formulation. We suggest to address this scale ambiguity by considering the norm constraint $\|\u_k\|_2=c > 0$ on the basis vectors.

\begin{figure}[!h]
		\psfrag{TAGk1}[c][][1][0]{$\mathcal{K}_1$}
		\psfrag{TAGk2}[c][][1][0]{$\mathcal{K}_2$}
		\psfrag{TAGu1}[c][][1][0]{$\bs{u}_1$}
		\psfrag{TAGu2}[c][][1][0]{$\bs{u}_2$}
	   	\psfrag{TAGs1}[c][][1][0]{$\bs{z}_1$}
		\psfrag{TAGs2}[c][][1][0]{$\bs{z}_2$}
		\psfrag{TAGs3}[c][][1][0]{$\bs{z}_3$}
		\psfrag{TAGs4}[c][][1][0]{$\bs{z}_4$}
		\psfrag{TAGl0}[c][][1][0]{$\|\w\|_1=1$}
		\psfrag{TAGl1}[l][][1][0]{$\|\w\|_1=\lambda_1$}
		\psfrag{TAGl2}[l][][1][0]{$\|\w\|_1=\lambda_3$}
		\psfrag{TAGl3}[l][][1][0]{$\|\w\|_1=\lambda_2$}		
\includegraphics[width=0.95\linewidth]{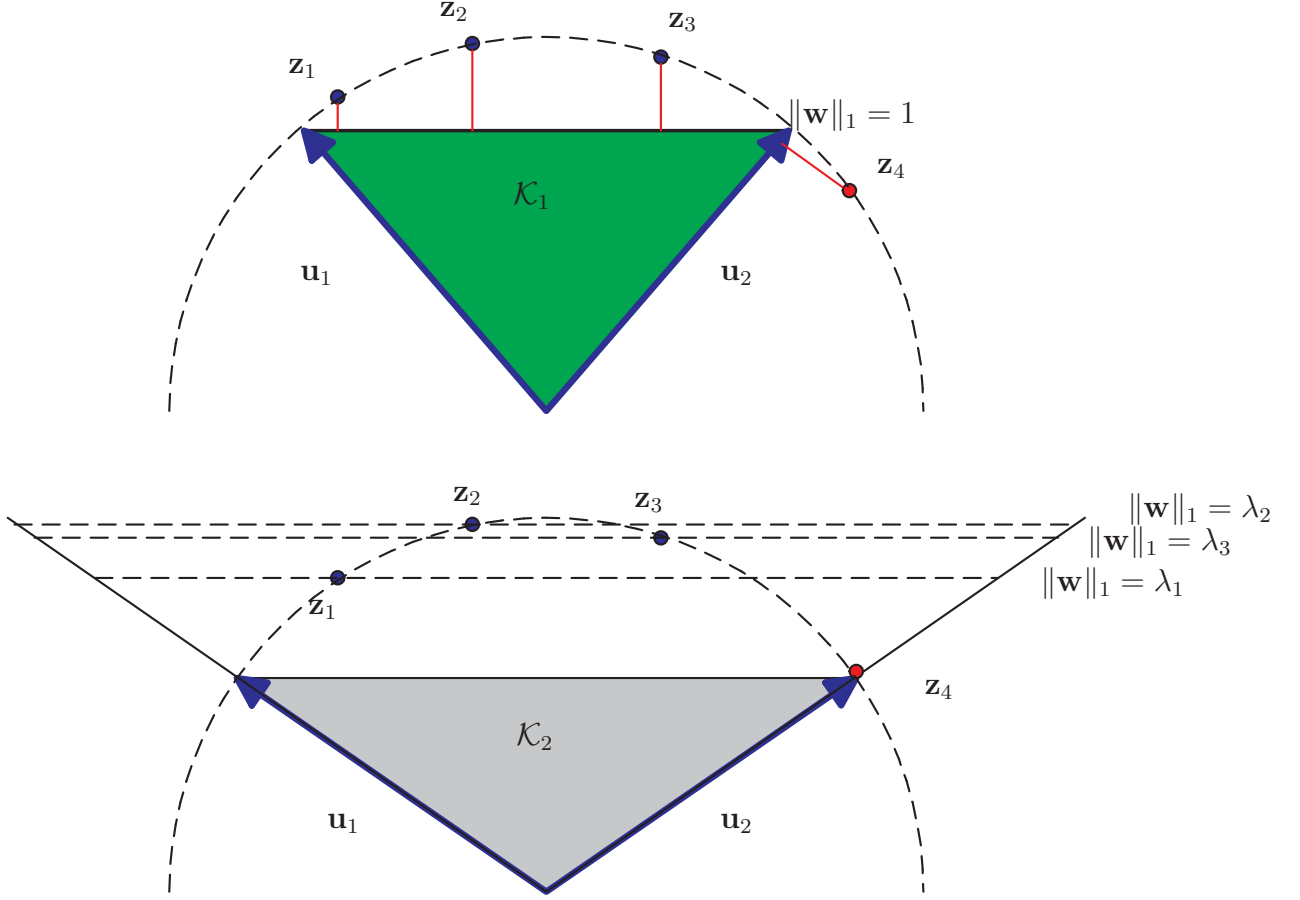}
\caption{\label{FIG_REG} Illustration of different cone solutions. For simplicity, we plot for the case $c=1$ and $\|\z\|_2\approx 1$.}
\end{figure}

In summary, the proposed formulation can be explicitly written as
	\begin{eqnarray}
		\label{EQU_MIN_RISK}
		\min_{\U} \left\{ \frac{1}{P} \sum_{i=1}^{P} \frac{1}{n_{q_i}}\left(\sum_{j=1}^{n_{q_i}} \min_{\w^{q_i}_j} \phi^{q_i}_j \| \z^{q_i}_j - \U\w^{q_i}_j \|_2^2 \right) \right\}\\
			\mbox{s.t.} \ \|\u_k\|_2 =c, \w^{q_i}_j\geq \bs{0}, 0 <\lambda_l \leq \|\w^{q_i}_j\|_1 \leq \lambda_u. \nonumber
	\end{eqnarray}

\subsection{Generalization bound}
We restrict our study on generalization bound from an algorithmic stability viewpoint, which is initially introduced in \cite{Bousquet_Elisseff02} and based on the concentration property of random variables. In the ranking context, generalization bounds for point-wise ranking / ordinal regression have been obtained \cite{Agarwal08,Cossock_Zhang08}. Recently, \cite{Lan_etal08} show that the generalization bound result in \cite{Bousquet_Elisseff02} still holds in the ranking context. More specifically, we would like to study the variation of the expected query-level risk, defined as 
	\begin{eqnarray}
	 	R(\mathcal{K}) = \int_{\mathcal{Q}\times \mathcal{Y}} L(\mathcal{K};q) {P}_{\mathcal{Q}}(dq).
	\end{eqnarray}
where $L(\mathcal{K};q)$ denotes the expected query-level loss defined as
	\begin{eqnarray}
	 	L(\mathcal{K};q) = \int_{\mathcal{Z}} l(\mathcal{K};\z^q,y^q) {P}_{\mathcal{Z}}(d\z^q)
	\end{eqnarray}
and ${P}_{\mathcal{Z}}$ denotes the probability distribution of the (ordered) document differences.

Following \cite{Bousquet_Elisseff02} and \cite{Lan_etal08} we define the uniform leave-one-query-out document-pair-level stability as
	\begin{eqnarray}
		\label{EQU_STABILITY}
	 	\beta = \sup_{q\in\mathcal{Q},i\in[1,\ldots,P]} |l(\mathcal{K}_{S};\z^q,y^q) - l(\mathcal{K}_{S^{-i}};\z^q,y^q)  |
	\end{eqnarray}
where $\mathcal{K}_{S}$ and $\mathcal{K}_{S^{-i}}$ are respectively the polyhedral cones learned from the full training set and that without the $i$th query. As stated in \cite{Lan_etal08}, it can be easily shown the following query-level stability bounds by integration or average sum of the term on the left hand side in the above definition
	\begin{eqnarray}
	 	|L(\mathcal{K}_{\mathcal{S}};q) - L(\mathcal{K}_{\mathcal{S}^{-i}};q) | \leq \beta, \forall i\\
		|\hat{L}(\mathcal{K}_{\mathcal{S}};q) - \hat{L}(\mathcal{K}_{\mathcal{S}^{-i}};q) | \leq \beta, \forall i.
	\end{eqnarray}
Using the above query-level stability results and by considering $S_{q_i}$ as query-level samples, one can directly apply the result in \cite{Bousquet_Elisseff02} (see also \cite{Lan_etal08}) to obtain the following generalization bound
\begin{theorem}
 For the proposed ConeRank algorithm with uniform leave-one-query-out document-pair-level stability $\beta$, with probability of at least $1-\varepsilon$ it holds
	\begin{eqnarray}
	 R(\mathcal{K}_{S}) \leq \hat{R}(\mathcal{K}_S) + 2\beta + (4P\beta + \gamma)\sqrt{\frac{\ln(1/\varepsilon)}{2P}},
	\end{eqnarray}
where $\gamma = \sup_{q\in\mathcal{Q}}l(\mathcal{K}_{S};\z^q,y^q)$ and $\varepsilon\in[0,1]$.
\end{theorem}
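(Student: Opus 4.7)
The plan is to apply the algorithmic stability machinery of Bousquet--Elisseeff at the query level, treating the queries $q_1,\ldots,q_P$ as i.i.d.\ draws from ${P}_{\mathcal{Q}}$. The two query-level stability inequalities displayed immediately above the theorem statement---obtained by averaging the document-pair-level stability~(\ref{EQU_STABILITY}) over $\z^q$ and over $S_q$ respectively---are exactly the hypotheses needed. The argument then decomposes into (i) bounding the expected generalization gap by $2\beta$ and (ii) applying a bounded-differences concentration inequality for the high-probability deviation term.

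First I would establish $\E_S[R(\mathcal{K}_S)-\hat R(\mathcal{K}_S)]\leq 2\beta$. Using i.i.d.\ symmetry of the queries, $\E_S[\hat R(\mathcal{K}_S)]=\E_{S}[\hat L(\mathcal{K}_S;q_i,S_{q_i})]$ for any fixed $i$, and a renaming argument with an independent copy $q_i'$ relates this, through the query-level stabilities on both $\hat L$ and $L$, to $\E_S[R(\mathcal{K}_S)]$ with total slack at most $2\beta$. This is the query-level analogue of Lemma~7 of \cite{Bousquet_Elisseff02} and is worked out explicitly in \cite{Lan_etal08}, so I would invoke those results almost verbatim rather than repeat the symmetrization.

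Next I would check the bounded-differences constant. Let $F(S)=R(\mathcal{K}_S)-\hat R(\mathcal{K}_S)$ and form $S'$ by replacing one query $q_i$ by an independent copy $q_i'$. Decomposing $F(S)-F(S')$ through the intermediate cone $\mathcal{K}_{S^{-i}}$ trained on the common $P-1$ queries gives (a) a retraining contribution of at most $\beta$ in $R$ from each of the two directions $S\to S^{-i}$ and $S^{-i}\to S'$, and analogously $\beta$ in $\hat R$, producing $2\beta+2\beta$ in total via the query-level stabilities, and (b) a direct contribution of at most $\gamma/P$ from the single changed summand in $\hat R$, using $l\leq\gamma$. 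These pieces sum to a coordinate constant $c_i\leq 4\beta+\gamma/P$, and McDiarmid's inequality applied to the $P$ independent queries yields, with probability at least $1-\varepsilon$,
\begin{equation*}
F(S)\leq \E_S[F(S)]+(4P\beta+\gamma)\sqrt{\frac{\ln(1/\varepsilon)}{2P}}.
\end{equation*}
Combining this with the $2\beta$ expectation bound gives the claimed inequality.

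The main obstacle is bookkeeping rather than any deep probabilistic step: one must verify that the document-pair-level stability $\beta$ from~(\ref{EQU_STABILITY}) integrates cleanly into both the $L$ and $\hat L$ query-level bounds without additional constants, and count the three contributions---$2\beta$ retraining in $R$, $2\beta$ retraining in $\hat R$, and $\gamma/P$ data swap---exactly once each in the McDiarmid constant. Once these integrations and counts are in place, no further structural properties of polyhedral cones or of problem~(\ref{EQU_MIN_RISK}) enter the proof; the bound is driven entirely by $\beta$ and $\gamma$ through the Bousquet--Elisseeff framework.
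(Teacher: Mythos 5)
Your proposal is correct and follows essentially the same route as the paper, which proves Theorem 1 simply by invoking the query-level stability inequalities displayed above it and then citing the Bousquet--Elisseeff result (via Lan et al.) verbatim; your reconstruction of the $2\beta$ symmetrization bound and the McDiarmid coordinate constant $4\beta+\gamma/P$ matches the constants in the stated bound exactly. You have merely made explicit the bookkeeping that the paper delegates to the citations.
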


As can be seen, the bound on the expected query-level risk depends on the stability. It is of practical interest to study the stability $\beta$ for the proposed algorithm. The following result shows that the change in the cone due to leaving one query out can provide an effective upper bound on the uniform stability $\beta$.  For notational simplicity, we only consider the non-weighted version of the loss, as the weighted version is simply a scale of the bound by the maximum weight.

\begin{theorem}
Denote as $\U$ and $\U^{-i}$ the `basis' vectors of the polyhedral cones $\mathcal{K}_{S}$ and $\mathcal{K}_{S^{-i}}$ respectively. For a  ConeRank algorithm with non-weighted loss, we have
	\begin{eqnarray}
	\beta \leq 2 s_{\rm max} \lambda_u(\rho+\sqrt{K}c\lambda_u) + s_{\rm max}^2\lambda_u^2,
	\end{eqnarray}
	where $s_{\rm max} = \max_{i} \|\U -\U^{-i} \|$, $\| \bullet\|$ denotes the spectral norm, and $\rho$ is the normalizing factor of $\z$ (c.f. (\ref{EQU_NORM})).
\end{theorem}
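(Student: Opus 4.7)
The plan is to expand the stability $\beta$ by comparing the two loss values $\|\z - \U\w_1\|_2^2$ and $\|\z - \U^{-i}\w_2\|_2^2$, where $\w_1$ and $\w_2$ are the optimal representation weights in the respective projection problems, i.e.\ each minimizes $\|\z - \U\w\|_2^2$ (resp.\ $\|\z - \U^{-i}\w\|_2^2$) over the common feasible set $C = \{\w \geq \bs{0},\ \lambda_l\leq \|\w\|_1 \leq \lambda_u\}$. Since $C$ does not depend on $\U$, the vector $\w_2$ is feasible for the first problem, so by optimality $\|\z - \U\w_1\|_2^2 \leq \|\z - \U\w_2\|_2^2$. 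A symmetric argument swapping the roles gives a matching bound from the other side, so it suffices to control the quantity $\|\z - \U\w_2\|_2^2 - \|\z - \U^{-i}\w_2\|_2^2$.

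First I would write $\z - \U\w_2 = (\z - \U^{-i}\w_2) - (\U - \U^{-i})\w_2$, set $r = \z - \U^{-i}\w_2$ and $\delta = (\U - \U^{-i})\w_2$, and use the identity
\begin{equation*}
\|r - \delta\|_2^2 - \|r\|_2^2 \; = \; -2\,r^{\top}\delta + \|\delta\|_2^2 \; \leq \; 2\|r\|_2\|\delta\|_2 + \|\delta\|_2^2.
\end{equation*}
It then remains to bound the two factors $\|r\|_2$ and $\|\delta\|_2$ uniformly in the data and in the index $i$, which is the technical core of the proof.

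For $\|\delta\|_2$, the spectral norm definition gives $\|\delta\|_2 \leq \|\U - \U^{-i}\|\cdot\|\w_2\|_2 \leq s_{\max}\|\w_2\|_2$, and since $\w_2 \geq \bs{0}$ the elementary bound $\|\w_2\|_2 \leq \|\w_2\|_1 \leq \lambda_u$ yields $\|\delta\|_2 \leq s_{\max}\lambda_u$. For $\|r\|_2$, I would use the triangle inequality $\|r\|_2 \leq \|\z\|_2 + \|\U^{-i}\w_2\|_2$; the normalization (\ref{EQU_NORM}) gives $\|\z\|_2 \leq \rho$, while the column-norm constraint $\|\u_k\|_2 = c$ implies $\|\U^{-i}\| \leq \|\U^{-i}\|_F \leq \sqrt{K}c$, so $\|\U^{-i}\w_2\|_2 \leq \sqrt{K}c\,\lambda_u$. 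Hence $\|r\|_2 \leq \rho + \sqrt{K}c\lambda_u$.

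Substituting the two bounds back gives exactly $2 s_{\max}\lambda_u(\rho + \sqrt{K}c\lambda_u) + s_{\max}^2\lambda_u^2$ as an upper estimate for one side of the absolute value; the same argument applied with $\U$ and $\U^{-i}$ interchanged handles the other side, yielding the claim. The main obstacle I anticipate is the careful justification of the comparison step, namely that the feasible set $C$ is genuinely independent of $\U$ so that $\w_1,\w_2$ can be substituted into each other's objectives; once that is in place the remaining work is routine norm-inequality bookkeeping, in particular the two conversions $\|\w\|_2 \leq \|\w\|_1$ (valid for $\w \geq \bs{0}$) and $\|\U^{-i}\| \leq \|\U^{-i}\|_F \leq \sqrt{K}c$ that turn the $\ell_1$ weight constraint and the column-norm constraint into spectral-norm bounds.
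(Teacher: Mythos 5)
Your proposal is correct and follows essentially the same route as the paper: both reduce the supremum to the one-sided quantity $\|\z-\U\w^{-i}\|_2^2-\|\z-\U^{-i}\w^{-i}\|_2^2$ by substituting the other problem's minimizer (feasible because $\mathcal{C}$ is independent of $\U$), expand via the perturbation $\bm{\Delta}=\U-\U^{-i}$, and apply the same three norm bounds $\|\w\|_2\leq\|\w\|_1\leq\lambda_u$, $\|\U\|\leq\|\U\|_F\leq\sqrt{K}c$, and $\|\z\|_2\leq\rho$. The only cosmetic difference is that you bound the residual $\|\z-\U^{-i}\w^{-i}\|_2$ as a single quantity before multiplying out, whereas the paper distributes the terms first; the resulting bound is identical.
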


\begin{proof}
Following the proposed algorithm, we equivalently study the bound of 
	\begin{eqnarray}
	 	\beta &= &   \sup_{q\in\mathcal{Q} \atop\|\z^q\|_2 \leq \rho}  \left| \min_{\w\in\mathcal{C}}\|\z^q-\U \w\|_2^2  
		 - \min_{\w\in\mathcal{C}} \|\z^q-\U^{-i}\w\|_2^2      \right| \nonumber
	\end{eqnarray}
where the constraint set $\mathcal{C}=\{ \w: \w \geq \bs{0}, \lambda_l \leq \| \w \|_1\leq \lambda_u \}$. Without loss of generality, we can assume that
\[\min_{\w\in\mathcal{C}}\|\z^q-\U \w\|_2^2 > \min_{\w\in\mathcal{C}} \|\z^q-\U^{-i}\w\|_2^2 \] 
and the minima are attained at $\w$ and $\w^{-i}$ respectively. Due to the definition, it follows that
	\begin{eqnarray}
	 	\beta & \leq & \sup_{q\in\mathcal{Q} \atop\|\z^q\|_2 \leq \rho} \left( 
			\|\z^q - \U\w^{-i}\|_2^2 - \|\z^q - \U^{-i}\w^{-i}\|_2^2 
		\right). 		\nonumber
	\end{eqnarray}
Expanding the term on the left, and using matrix norm inequalities, one obtains
	\begin{eqnarray}
	 	\beta & \leq & \sup_{q\in\mathcal{Q}} \left (2\|\U\|\|\bm{\Delta}\| + \|\bm{\Delta}\|^2) \|\w^{-i}\|_2^2 \right.\nonumber \\
			&& \left. + 2 \|\z^q\|_2 \|\bm{\Delta} \| \|\w^{-i}\|_2 \right) 
	\end{eqnarray}
where $\bm{\Delta} = \U-\U^{-i}$. The proof follows by the following facts
\begin{itemize}
	\item $\| \U\| \leq \sqrt{K}c$ due to each $\|\u_k\|_2\leq c$ and that $\| \U\|\leq \| \U\|_F$ where $\|\bullet\|_F$ denotes the Frobenius norm.
	\item 	$\|\w\|_2^2 \leq {\|\w\|_1^2}$ for $\w \geq \bs{0}$
	\item $\|\z^q\|_2\leq \rho$ due to the normalization
\end{itemize}
and that $\|\bm{\Delta}\|\leq s_{\rm max}$ by definition.
\end{proof}

It is more interesting to study the bound on $s_{\rm max}$. We conjecture that this will depend on the sample size as well as the nature of the proposed conic regularization. However, this is still an open question and such an analysis is beyond the scope of the current work.

We note importantly that as the stability bound can be made small by lowering $\lambda_u$. Doing so definitely improves stability at the cost of making the empirical risk large and hence the bias becomes significantly undesirable. In practice, it is important to select proper values of the parameters to provide optimal bias-variance trade-off. Next, we turn the discussion on practical implementation of the ideas, taking into account the large-scale nature of the problem.

\section{Implementation}

In the original formulation (\ref{EQU_MIN_RISK}), the scaling ambiguity is resolved by placing a norm constraint on $\u_k$. However, a direct implementation seems difficult. In what follows, we propose an alternative implementation by resolving the ambiguity on $\w$ instead. We fix $\|\w\|_1=1$ and consider the norm inequality constraint on $\u_k$ as $\|\u_k\|_2\leq c$ (i.e. convex relaxation on equality constraint) where $c$ is a constant of $\mathcal{O}(\|\z^q\|_2)$. This leads to an {\it approximate} formulation
	\begin{eqnarray}
		\label{EQU_MIN_RISK2}
		\min_{\U, \w^{q_i}_j}  \left\{ \frac{1}{P} \sum_{i=1}^{P} \frac{1}{n_{q_i}}\left(\sum_{j=1}^{n_{q_i}}\phi^{q_i}_j \| \z^{q_i}_j - \U\w^{q_i}_j \|_2^2 \right) \right\}\\
			\mbox{s.t.} \ \|\u_k\|_2 \leq c, \w^{q_i}_j\geq \bs{0},  \|\w^{q_i}_j\|_1 = 1. \nonumber
	\end{eqnarray}
The advantage of this approximation is that the optimization problem is now convex with respect to each $\u_k$ and still convex with respect to each ${\w^{q_i}_j}$. This suggests an alternating and iterative algorithm, where we only vary a subset of variables and fix the rest. The objective function should then always decrease. As the problem is not strictly convex, there is no guarantee of a global solution. Nevertheless, a locally optimal solution can be obtained. The additional advantage of the formulation is that gradient-based methods can be used for each sub-problem and this is very important in large-scale problems.

\begin{algorithm}[h]
   \caption{Stochastic Gradient Descent}
   \label{alg:sgd}
\begin{algorithmic}
   \STATE {\bfseries Input:} queries $q_i$ and pair differences $\z^{q_i}_j$.
   \STATE Randomly initialize $\u_{k}, \ \forall k\leq K$; set $\mu>0$
   \REPEAT
	\STATE 1. The \emph{folding-in} step (fixed $\U$):
	\STATE Randomly initialize $\w^{q_i}_j: \w^{q_i}_j\ge\bs{0}; \|\w^{q_i}_j\|_1=1$; 
	\REPEAT			
				\STATE 1a. Compute $\w^{q_i}_j\leftarrow \w^{q_i}_j-\mu{\partial \hat{R}(\w^{q_i}_j})/{\partial \w^{q_i}_j}$ 
				\STATE 1b. Set $\w^{q_i}_j \leftarrow \max\{\w^{q_i}_j,\bs{0}\}$ (element-wise)
				\STATE 1c. Normalize $\w^{q_i}_j \leftarrow \w^{q_i}_j / \|\w^{q_i}_j\|_1$			
	\UNTIL{converged}
	\STATE 2. The \emph{basis-update} step (fixed $\w$): 
	\FOR{$k=1$ {\bfseries to} $K$}
		\STATE 2a. Update $\u_k\leftarrow \u_k - \mu{\partial \hat{R}(\u_k})/{\partial \u_k}	 $
		\STATE 2b. Normalize $\u_k$ to norm $c$ if violated.
	\ENDFOR
	
   \UNTIL{converged}
\end{algorithmic}
\end{algorithm}

\subsection{Stochastic Gradient }

Since the number of pairs may be large for typically real datasets, we do not want to store every $\w^q_j$. Instead, for each iteration, we perform a \emph{folding-in} operation, in that we fix the basis $\U$, and estimate the coefficients $\w^q_j$. Since this is a convex problem, it is possible to apply the stochastic gradient (SG) method as shown in Algorithm \ref{alg:sgd}. Note that we express the empirical risk as the function of {\it only} variable of interest when other variables are fixed for notational simplicity. In practice, we also need to check if the cone is proper and we find this is always satisfied.

\subsection{Exponentiated Gradient}
Exponentiated Gradient (EG) \cite{kivinen1997exponentiated} is an algorithm for estimating distribution-like parameters. Thus, Step 1a 
can be replaced by
\begin{eqnarray}	
	\w^{q_i}_j\leftarrow \w^{q_i}_j\exp\left\{-\mu{\partial \hat{R}(\w^{q_i}_j)}/{\partial \w^{q_i}_j}\right\}
	(\mbox{element-wise}). \nonumber 
\end{eqnarray}
For faster numerical computation (by avoiding the exponential), as shown in \cite{kivinen1997exponentiated}, this
step can be approximated by 
	\begin{eqnarray}
		(\w^{q_i}_j)_k\leftarrow (\w^{q_i}_j)_k \left(1 - \mu \left({\partial \hat{R}(\hat{\z}^{q_i}_j)}/{\partial \hat{\z}^{q_i}_j}\right)^{\top} \ (\u_k-\hat{\z}^{q_i}_j)    \right) \nonumber 
	\end{eqnarray}
where the empirical risk $\hat{R}$ is parameterized in terms of $\hat{\z}^{q_i}_j=\U\w^{q_i}_j$. When the learning rate $\mu$ is sufficiently small, this update readily ensures the normalization of $\w^{q_i}_j$. The main difference between SG and EG is that, update in SG is \emph{additive}, while it is \emph{multiplicative} in EG.

\begin{algorithm}[h]
   \caption{Query-level Prediction}
   \label{alg:prediction}
\begin{algorithmic}
   \STATE {\bfseries Input:} New query $q$ with pair differences $\{\z^q_j\}_{j=1}^{n_q}$
   \STATE Maintain a scoring array $A$ of all pre-computed feature vectors,
initialize $A_{l}=0$ for all $l$. 
	\STATE Set $\phi^{q}_j=1,\forall j\leq n_q$.
   
   \FOR{$j=1$ {\bfseries to} $n_q$}
		\STATE Perform \emph{folding-in} to estimate the coefficients without the non-negativity
constraints.
		\STATE Check if the sum of the coefficients is positive, then $A_{l}\leftarrow A_{l}+1$
; otherwise $A_{m}\leftarrow A_{m}+1$ 
   \ENDFOR   
   \STATE Output the ranking based on the scoring array $A$.
\end{algorithmic}
\end{algorithm}

\subsection{Prediction \label{sub:Prediction}}
Assume that the basis $\U=(\u_{1},\u_{2},...,\u_{K})$ has been learned
during training. In testing, for each query, we are also given a set
of feature vectors, and we need to compute a ranking function that
outputs the appropriate positions of the vectors in the list. 

Unlike the training data where the order of the pair $(l,m)$ is given,
now this order information is missing. This breaks down the conic
assumption, in that the difference of the two vectors is the non-negative
combination of the basis vectors. Since the either preference orders
can potentially be incorrect, we relax the constraint of the non-negative
coefficients. The idea is that, if the order is correct, then the
coefficients are mostly positive. On the other hand, if the order
is incorrect, we should expect that the coefficients are mostly negative.
The query-level prediction is proposed as shown in Algorithm \ref{alg:prediction}. As this query-level prediction is performed over a query, it can address the shortcoming of logical discrepancy of document-level prediction in the pairwise approach.

\section{Discussion}

RankSVM \cite{joachims2002optimizing} defines the following loss
function over ordered pair differences
\begin{eqnarray*}
	L(\u)& = &\frac{1}{P}\sum_{j}\max(0,1-\u^{\top}\z_j) +\frac{C}{2}\|\u\|_2^2
\end{eqnarray*}
where $\u\in\mathbb{R}^{N}$ is the parameter vector, $C>0$ is the penalty constant and $P$ is the number of data pairs. 

Being a pairwise approach, RankNet instead uses 
	\begin{eqnarray*}
		L(\u) & = & \frac{1}{P}\sum_{j}\log(1+\exp\{-\u^{\top}\z_j\})+\frac{C}{2}\|\u\|_2^2.
	\end{eqnarray*}

This is essentially the 1-class SVM applied over the ordered pair differences. The quadratic regularization term tends to push the separating hyperplane away from the origin, i.e. maximizing the 1-class margin.


It can be seen that the RankSVM solution is the special case when the cone approaches a halfspace. In the original RankSVM algorithm, there is no intention to learn a non-negative subspace where ordinal information is to be found like in the case of ConeRank. This could potentially give ConeRank more analytical power to trace the origin of preferences.

\section{Experiments}

\subsection{Data and Settings}

We run the proposed algorithm on the latest and largest benchmark data LETOR 4.0. This has two data sets for supervised learning, namely MQ2007 (1700 queries) and MQ2008 (800 queries). Each returned document is assigned a integer-valued relevance score of $\{0,1,2\}$ where $0$ means that the document is irrelevant with respect to the query. For each query-document pair, a vector of $46$ features is pre-extracted, and available in the datasets. Example features include the term-frequency and the inverse document frequency
in the body text, the title or the anchor text, as well as link-specific like the PageRank and the number of in-links.  The data is split into a training set, a validation set and a test set.
We normalize these features so that they are roughly distributed as Gaussian with zero means and unit standard deviations. During the folding-in step, the parameters $\w^{q}_j$ corresponding to pair $j$th of query $q$ are randomly initialized from the non-negative uniform distribution and then normalized so that $\|\w^{q}_j\|_1 = 1$. The basis vectors $\u_k$ are randomly initialized to satisfy the relaxed norm constraint. The learning rate is $\mu=0.001$ for the SG and $\mu=0.005$ for the EG. For normalization, we select $\alpha=1$ and $\rho=\sqrt{N}$ where $N$ is the number of features, and we set $c=2\rho$.

\begin{figure}[t]
\begin{centering}
\includegraphics[width=0.90\linewidth]{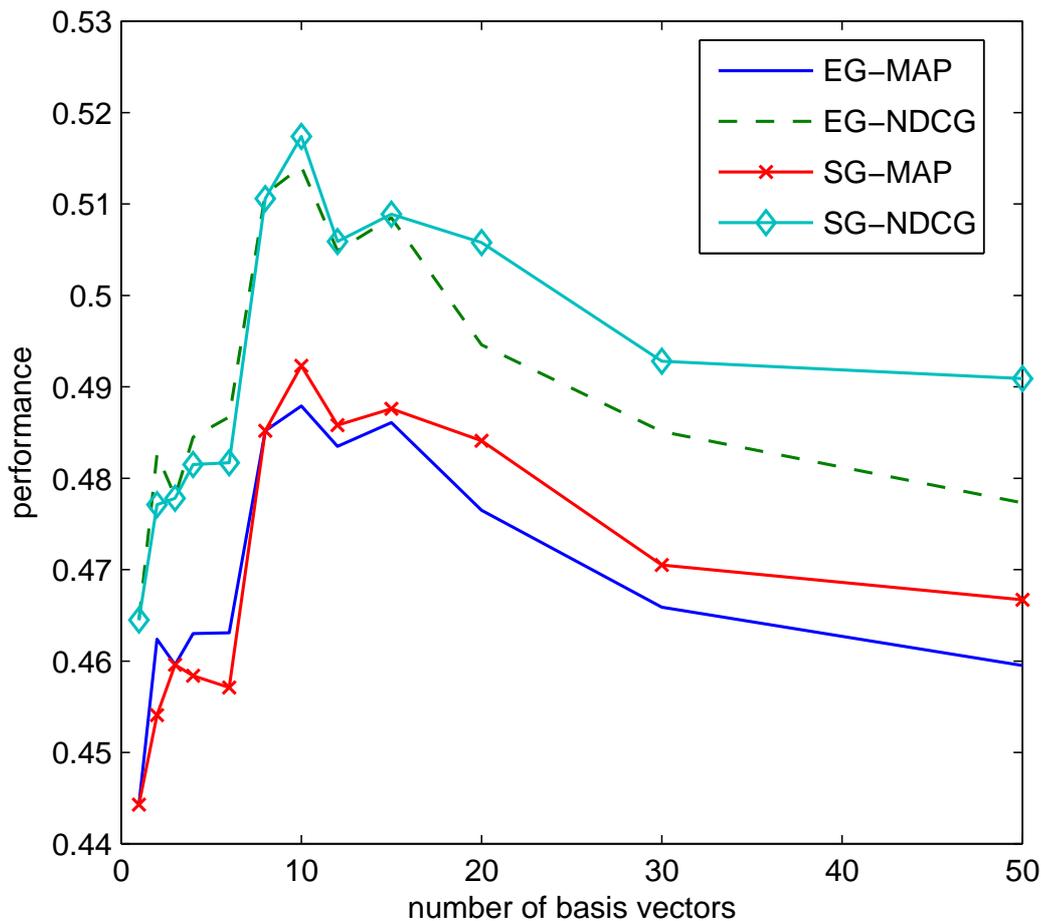}
\par\end{centering}

\vskip -0.1in
\caption{\label{Flo:basis-rank}Performance versus basis number}
\vskip -0.2in

\end{figure}

\subsection{Results}

The two widely-used evaluation metrics employed are the Mean Average Precision (MAP) and the Normalized Discounted Cumulative Gain (NDCG). We use the evaluation scripts distributed with LETOR 4.0.

In the first experiment, we investigate the performance of the proposed method with respect to the number of basis vectors $K$. The result of this experiment on the MQ2007 dataset is shown in Fig. \ref{Flo:basis-rank}. We note an interesting observation that the performance is highest at about $K=10$ out of 46 dimensions of the original feature space. This seems to suggest that the idea of capturing an informative subspace using the cone makes sense on this dataset. Furthermore, the study on the eigenvalue distribution of the non-centralized ordered pairwise differences on on the MQ2007 dataset, as shown in Fig. \ref{FIG_EIGEN}, also reveals that this is about the dimension that can capture most of the data energy.

\begin{figure}[h]
	\begin{centering}
	\includegraphics[width=0.9\linewidth]{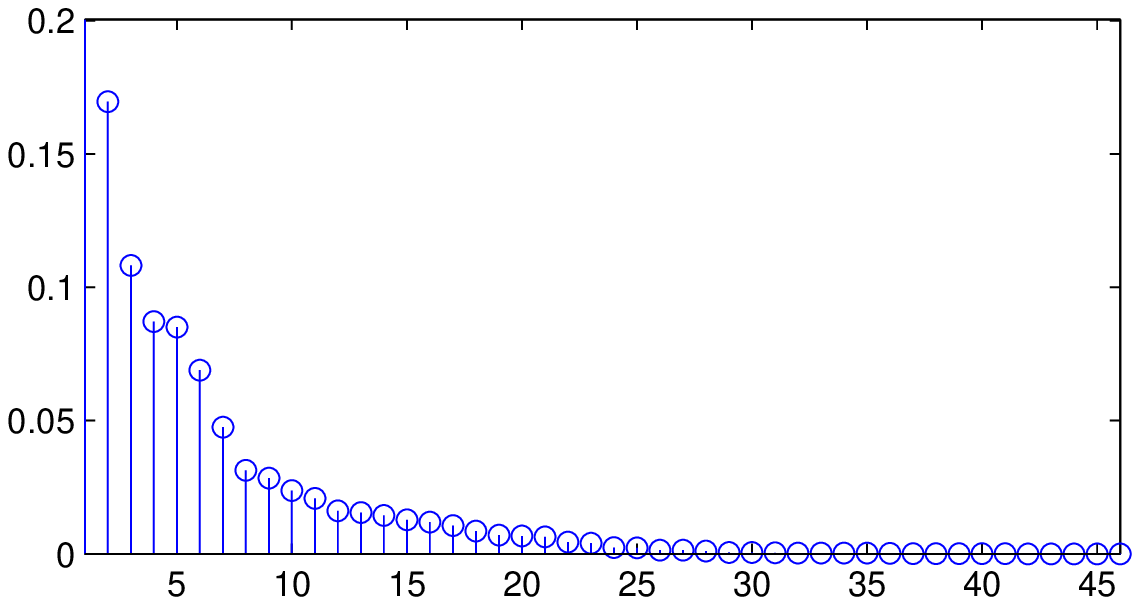}
	\end{centering}
	\caption{ \label{FIG_EIGEN} Eigenvalue distribution on the MQ2007 dataset.}	
\end{figure}

We then compare the proposed and recent base-line methods\footnote{from http://research.microsoft.com/en-us/um/beijing/projects/letor/letor4baseline.aspx} in the literature and the results on the MQ2007 and MQ2008 datasets are shown in Table \ref{Flo:baseline}. The proposed ConeRank is studied with $K=10$ due to the previous experiment. We note that all methods tend to perform better on MQ2007 than MQ2008, which can be explained by the fact that the MQ2007 dataset is much larger than the other, and hence provides better training. 

On the MQ2007 dataset, ConeRank compares favourably with other methods. For example, ConeRank-SG achieves the highest MAP score, whilst its NDCG score differs only less than 2\% when compared with the best (RankSVM-struct). On the MQ2008 dataset, ConeRank still maintains within the 3\% margin of the best methods on both MAP and NDCG metrics.

\begin{table}[h]
\caption{Results on LETOR 4.0.}
\label{Flo:baseline}
\vskip 0.15in
\begin{center}
\begin{small}
\begin{sc}

\begin{tabular}{l>{\centering}p{8mm}>{\centering}p{8mm}c>{\centering}p{8mm}>{\centering}p{8mm}}
\hline 
  & \multicolumn{2}{c}{MQ2007} & \multicolumn{2}{c}{MQ2008} \tabularnewline
\hline
Algorithms & MAP & NDCG & MAP & NDCG \tabularnewline
\hline

AdaRank-MAP & 0.482 & 0.518 & 0.463 & {\bf 0.480} \tabularnewline

AdaRank-NDCG & 0.486 & 0.517 & 0.464 & 0.477 \tabularnewline

ListNet & 0.488 & 0.524 & 0.450 & 0.469\tabularnewline

RankBoost & 0.489 & 0.527 & {\bf 0.467} & {\bf 0.480} \tabularnewline

RankSVM-struct & 0.489 & {\bf 0.528} & 0.450 & 0.458 \tabularnewline
\hline 
ConeRank-EG & 0.488 &  0.514  & 0.444 & 0.456\tabularnewline

ConeRank-SG & 0.{\bf 492} & 0.517  & 0.454 & 0.464\tabularnewline
\hline
\end{tabular}
\end{sc}
\end{small}
\end{center}
\vskip -0.1in

\end{table}

\section{Conclusion}
We have presented a new view on the learning to rank problem from a generalized inequalities perspective. We formulate the problem as learning a polyhedral cone that uncovers the non-negative subspace where ordinal information is found. A practical implementation of the method is suggested which is then observed to achieve comparable performance to state-of-the-art methods on the LETOR 4.0 benchmark data. 

There are some directions that require further research, including a more rigorous study on the bound of the spectral norm of the leave-one-query-out basis vector difference matrix, a better optimization scheme that solves the original formulation without relaxation, and a study on the informative dimensionality of the ranking problem.


\bibliographystyle{plain}

\end{document}